\title[Rethinking Reasoning with NeSy LLMs]{Rethinking Reasoning in LLMs: Neuro-Symbolic Local RetoMaton Beyond ICL and CoT}
 \author{\Name{Rushitha Santhoshi Mamidala} \Email{sreerushitha@usf.edu}\\
  \Name{Anshuman Chhabra} \Email{anshumanc@usf.edu}\\
  \Name{Ankur Mali} \Email{ankurarjunmali@usf.edu}\\
  \addr Bellini College of AI Cybsersecurity and Computing \\ University of South Florida, Tampa}
\begin{document}

\maketitle

\begin{abstract}

Prompt-based reasoning strategies such as \textit{Chain-of-Thought} (CoT) and \textit{In-Context Learning} (ICL) have become widely used for eliciting reasoning capabilities in large language models (LLMs). However, these methods rely on fragile, implicit mechanisms often yielding inconsistent outputs across seeds, formats, or minor prompt variations making them fundamentally unreliable for tasks requiring stable, interpretable reasoning. In contrast, \textit{automata-based neuro-symbolic frameworks} like \textbf{RetoMaton} offer a more structured and trustworthy alternative by grounding retrieval in symbolic memory with deterministic transitions. In this work, we extend RetoMaton by replacing its global datastore with a \textit{local, task-adaptive Weighted Finite Automaton} (WFA), constructed directly from external domain corpora. This local automaton structure promotes \textit{robust, context-aware retrieval} while preserving symbolic traceability and low inference overhead. Unlike prompting, which entangles context and memory in opaque ways, our approach leverages the explicit structure of WFAs to provide \textit{verifiable and modular retrieval behavior}, making it better suited for domain transfer and interoperability. We evaluate this local RetoMaton variant on two pretrained LLMs \textbf{LLaMA-3.2-1B} and \textbf{Gemma-3-1B-PT} across three reasoning tasks: \textbf{TriviaQA} (reading comprehension), \textbf{GSM8K} (multi-step math), and \textbf{MMLU} (domain knowledge). Compared to the base model and prompting-based methods, augmenting these setups with local RetoMaton consistently improves performance while enabling transparent and reproducible retrieval dynamics. Our results highlight a promising shift toward \textit{trustworthy, symbolic reasoning in modern LLMs} via lightweight, automaton-guided memory.

\end{abstract}
\section{Introduction}
\label{sec:intro}

Large Language Models (LLMs) have transformed Natural Language Processing (NLP) by demonstrating the ability to learn deep, generalizable knowledge from data \citep{petroni2019language}, enabling strong performance across tasks such as text translation, question answering, and human-like text generation \citep{sutskever2014sequence, ouyang2022training, qin2023chatgpt, chia2023instructeval}. Although progress has been substantial, LLMs continue to face persistent challenges in mathematical reasoning and complex multi-step problem solving \cite{davellm}, domains that demand structured and interpretable reasoning \citep{rae2021scaling, frieder2023mathematical}. To bridge these gaps, techniques such as In-Context Learning (ICL) \citep{brown2020language}, and Chain-of-Thought (CoT) prompting \citep{wei2022chain}  have been proposed to enhance reasoning and factual grounding without modifying model weights. However, each comes with inherent limitations: ICL, while effective, demonstrates its strongest performance in large-scale models and is highly sensitive to the structure and ordering of prompts \citep{brown2020language, sclar2023quantifying,razavi2025benchmarking, loya2023exploring}; and CoT prompting, though helpful for reasoning, can produce fragile outputs that hallucinate intermediate steps lacking logical consistency \citep{yeo2024interpretable}. Meanwhile, task-specific fine-tuning of LLMs \citep{howard2018universal} remains computationally intensive \citep{hanindhito2025large, yan2025stp}, making it less practical for rapid adaptation. Moreover, having to fine-tune LLMs for specific reasoning problems detracts from their general-purpose nature and the knowledge encoded during the pre-training phase \citep{mou2016transferable, howard2018universal}.  Thus, these limitations spanning computational overhead, latency, brittleness, and lack of interpretability underscore the ongoing challenges of achieving robust generalization and reliable reasoning in LLMs. 

The persistent limitations of LLMs underscore the pressing need for structured and trustworthy mechanisms to elicit and ground reasoning processes in LLMs. One promising approach to address this challenge is through the integration of symbolic reasoning into neural models, a direction long pursued under the umbrella of Neuro-Symbolic AI (NeSy) \citep{d2009neural, besold2021neural}. NeSy methods aim to combine inductive learning and generalization strengths of LLMs with the structured, interpretable inference offered by symbolic systems such as logic rules, automata and knowledge graphs \citep{manhaeve2018deepproblog}. This integration offers a principled way to overcome the opacity and fragility of purely neural models, enabling interpretable, modular, and context-sensitive reasoning. A key advantage of LLMs that often remains underutilized is their ability to encode text into rich, high-dimensional embedding spaces that capture the semantic and syntactic structure learned during pretraining \citep{mikolov2013distributed, devlin2019bert, brown2020language}. This capacity, rooted in exposure to large and diverse corpora, enables generalization across tasks. While fine-tuning can further refine these representations for specific tasks, it is computationally expensive. We hypothesize that augmenting domain-specific knowledge directly in the embedding space can guide the model’s behavior and improve generalization without the need for gradient updates.  To realize this idea, we leverage RetoMaton \citep{alon2022neuro}, a neuro-symbolic extension of the kNN-LM framework \citep{khandelwal2019generalization}, that structures the embedding-based retrieval process using Weighted Finite Automata (WFAs). RetoMaton captures hidden representations from test corpora and organizes them into a symbolic structure that constrains retrieval during inference. This automaton-guided memory enables context-sensitive reasoning by enforcing structured and verifiable access paths, complementing the model’s internal representations and offering an efficient, interpretable alternative to traditional fine-tuning.


In this work, we introduce the Local RetoMaton—a neuro-symbolic, task-specific datastore that integrates with LLMs via a WFA. Unlike global retrieval methods that sample from an entire corpus, Local RetoMaton builds its datastore from task-relevant text, ensuring every candidate aligns naturally with the target task. By constraining retrieval to this automaton-defined ``local neighborhood,'' it reduces noise and enhances precision, selecting only the most pertinent contexts for each input. This tighter coupling between retrieved examples and the model’s latent predictive manifold yields more accurate, better-calibrated predictions. As an unsupervised, nonparametric mechanism, Local RetoMaton persistently injects symbolic memory into the model in an architecture-agnostic manner without any fine-tuning or modification of the LLM itself. Consequently, it enables generalization beyond the model’s original training data while supporting structured, interpretable reasoning under uncertainty, a hallmark of neuro-symbolic systems \citep{de2019neuro, garcez2019neural}. Moreover, the Local RetoMaton complements prompting strategies by grounding them with structure knowledge enabling consistent, verifiable, and task-aware reasoning.

We evaluate the Local RetoMaton using the pretrained language models LLaMA-3.2-1B \citep{grattafiori2024llama} and Gemma-3-1B-PT \citep{team2025gemma} on three distinct NLP tasks: reading comprehension \citep{zhu2021retrieving}, mathematical problem solving \citep{ahn2024large}, and domain-general question answering \citep{yue2025survey}. The symbolic component, WFA, is constructed from a task-specific datastore. For the TriviaQA dataset \citep{joshi2017triviaqa}, which targets reading comprehension, the WFA is built using associated evidence documents. For GSM8K \citep{cobbe2021gsm8k} and MMLU \citep{hendryckstest2021}, which assess mathematical reasoning and general knowledge respectively, the WFAs are constructed using data from the training distribution of each dataset. Our empirical evaluations highlight that grounding LLMs through the Local RetoMaton framework yields several key benefits, including:
\begin{enumerate}[nosep]
    \item Improved \textbf{reasoning efficiency, enhanced generalization, and robust domain adaptation} achieved by injecting \textbf{non-parametric knowledge} in a structured manner using symbolic weighted finite automata (WFA).
    \item \textbf{Improved consistency and robustness} across tasks by enforcing structured knowledge constraints.
    \item The symbolic component allows \textbf{verifiable and interpretable} decision-making \textbf{enhancing transparency and explainability}.
    \item Promotes \textbf{actionable and trustworthy} generation via \textbf{fine-grained} traversal.
\end{enumerate}
Overall, using a task-specific Local RetoMaton improves LM's performance yielding an average gain of \textbf{4.48\% with LLaMa and 2.78\% with Gemma} over three downstream NLP tasks compared to the baseline model.\vspace{-5mm}

\section{Related Works}
\label{sec:related}

We review two major directions in improving LLM reasoning: prompt-based generalization strategies and neuro-symbolic (NeSy) architectures that unify symbolic reasoning with neural representations. Together, these approaches aim to enhance interpretability, generalization, and trustworthiness in LLMs.

\noindent\textbf{Prompt-Based Generalization in LLMs.}  
While task-specific fine-tuning improves NLP performance \citep{howard2018universal, liu2019roberta}, it is computationally intensive \citep{radford2019language, ziegler1909fine} and lacks transferability. Prompting mitigates this by enabling inference without gradient updates. Evolving from manual templates to zero-, few-shot, and in-context learning (ICL) \citep{radford2019language, brown2020language}, ICL embeds task demonstrations directly into the input, guiding both format and behavior. Chain-of-Thought (CoT) prompting \citep{wei2022chain} extends ICL by including intermediate reasoning steps, which improves performance on arithmetic and multi-hop tasks by encouraging structured “thinking aloud.” Complementary to prompting, retrieval-augmented approaches enhance generalization without retraining. kNN-LM \citep{khandelwal2019generalization} interpolates predictions using nearest neighbors from an external datastore in the learned embedding space. RAG \citep{lewis2020retrieval} builds on this by retrieving raw text from external sources and injecting it into the prompt. While both approaches strengthen factual grounding, they suffer scalability limitations as datastore size grows.

\noindent\textbf{Neuro-Symbolic Approaches for Structured Reasoning.}  
To enhance explainability and compositionality, NeSy AI integrates logic-based reasoning with neural models \citep{garcez2019neural, d2009neural, bhuyan2024neuro}. By embedding symbolic rules into differentiable systems, NeSy models combine the strengths of structure and generalization. Foundational efforts include Neural Theorem Provers \citep{rocktaschel2017end} and DeepProbLog \citep{manhaeve2018deepproblog}, which enabled differentiable reasoning over first-order logic and probabilistic rules. In language-based tasks, NeSy models have successfully mapped text to symbolic forms such as equations and expression trees \citep{roy2016solving, chiang2018semantically, chen2019neural}, allowing for structured, interpretable reasoning beyond shallow pattern matching. However, many such systems rely on supervised or semi-supervised data, limiting scalability.

RetoMaton \citep{alon2022neuro} introduces a lightweight, architecture-agnostic NeSy framework inspired by kNN-LM. It structures the datastore as a WFA, clustering semantically similar embeddings into states and linking them with learned transitions. This enables efficient memory traversal across decoding steps by reducing redundant neighbor lookups while preserving context. Unlike other NeSy systems that depend on fine-tuning or external modules, RetoMaton integrates symbolic constraints directly into retrieval, enforcing coherent access paths and enabling structured generation without retraining.\vspace{-5mm}

\section{Proposed Approach}
\label{sec:approach}

\begin{figure}[!htbp]
\floatconts
    {fig:method}
    {\caption{Overview of the Local RetoMaton framework. The system combines a language model, a symbolic datastore of hidden states and next-token labels, and a transition-structured automaton formed by clustering latent states.}}
    {\includegraphics[trim=10 40 10 20, width=0.79\linewidth]{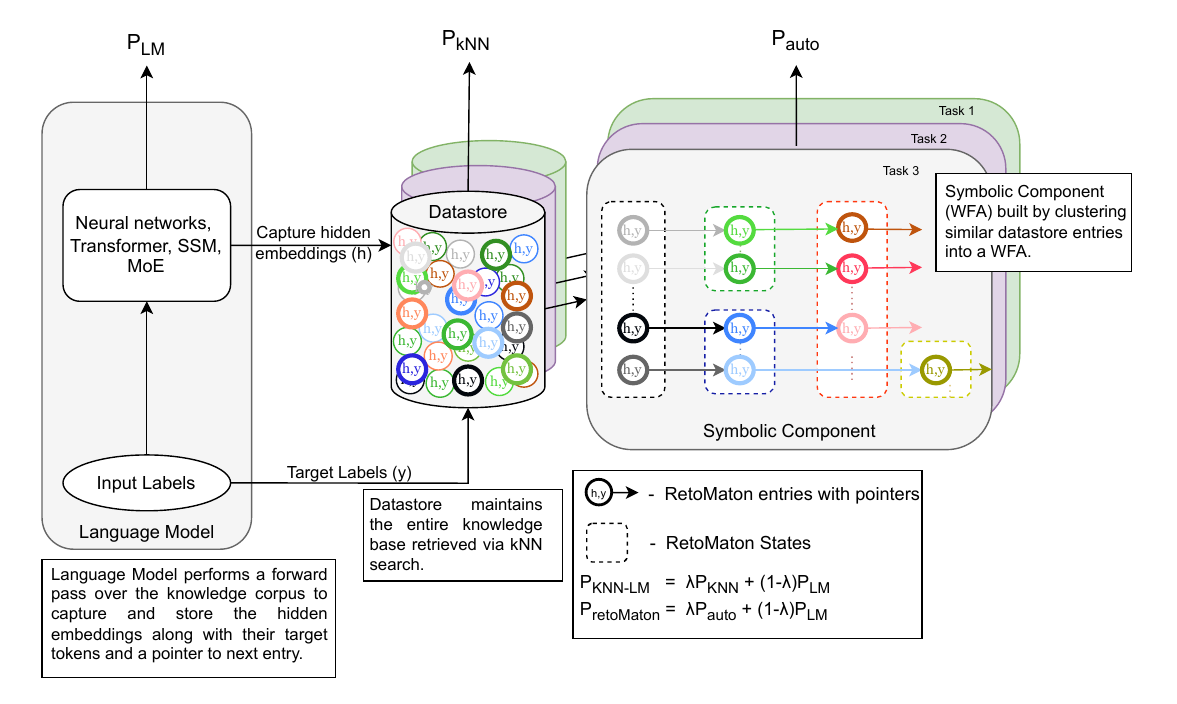}}
\end{figure}\vspace{-3mm}

We propose a theoretically grounded neuro-symbolic pipeline that transforms a frozen language model into an interpretable system using symbolic memory in the form of a Weighted Finite Automaton (WFA). Our framework is inspired by the RetoMaton design, but goes further by formalizing the integration of task-relevant representation space clustering with a more fine-grained local symbolic structure.

\noindent \textbf{\underline{From Language Model States to Symbolic Transitions:}} Let \( \phi \colon \mathbb{R}^d \times \Sigma \to \mathbb{R}^d \) denote the recurrence function of a pretrained language model, and let \( [x_1, \dots, x_n] \in \Sigma^n \) be a sequence from the corpus. The model produces a sequence of hidden states \( h^t = \phi(h^{t-1}, x_t) \) for each time step \( t \). Define the predictive target as \( y^t = x_{t+1} \).\vspace{-3mm}

\begin{definition}[Transition Datastore]
The symbolic datastore is a directed multigraph constructed from the sequence of hidden states and predicted tokens. Formally,\vspace{-3mm}
\begin{align}
    D = \bigcup_{i=1}^{n-1} \left\{ (h_i, y_i) \rightarrow (h_{i+1}, y_{i+1}) \right\},
\end{align}\vspace{-5mm}

\noindent where each $h_i \in \mathbb{R}^d$ is the hidden state at position $i$, and $y_i = x_{i+1} \in \Sigma$ is the next token in the sequence. Each node in $D$ corresponds to a pair $(h_i, y_i)$, representing the hidden state and its associated predicted token label.
\end{definition}

\begin{definition}[State Abstraction via Clustering]
Let \( Q = \{q_1, \dots, q_k\} \) denote clusters over \( \{h_i\} \) learned using an unsupervised algorithm (e.g., k-means). Each cluster defines a symbolic state of a WFA.
\end{definition}

\begin{definition}[Weighted Finite Automaton with Representation Conditioning]
We define the symbolic component as a WFA with vector-conditioned weights:\vspace{-3mm}
\begin{align}
    (Q, \Sigma, q_0, \delta, \theta),
\end{align}\vspace{-8mm}

\noindent where:
\begin{itemize}[nosep]
  \item $Q$ is a finite set of symbolic states,
  \item $\Sigma$ is the vocabulary,
  \item $q_0$ is the initial state,
  \item $\delta \colon Q \times \Sigma \to 2^Q$ is a non-deterministic transition function,
  \item $\theta \colon Q \times \mathbb{R}^d \times \Sigma \to \mathbb{R}_{\ge 0}$ assigns transition weights conditioned on hidden vectors.
\end{itemize}
\end{definition}
This transformation is \textbf{unsupervised}, \textbf{model-agnostic}, and requires no fine-tuning.

\noindent \textbf{\underline{Inference as Automaton-Guided Retrieval:}} Let \( h_q \) denote the current query vector and let \( \mathcal{N}_k(h_q) \subset D \) be the $k$ nearest neighbors. Then retrieval operates over: \vspace{-3mm}
\begin{align}
\label{eq:pknn}
    P_{\text{knn}}(y \mid h_q) \propto \sum_{(h_i, y_i) \in \mathcal{N}_k(h_q)} \mathbbm{1}_{y = y_i} \cdot \exp\left(-\frac{\|h_q - h_i\|^2}{\mathcal{T}}\right).
\end{align} \vspace{-3mm}

\noindent The final token prediction is an interpolation with mixing coefficient \( \lambda \in [0,1] \), 
\begin{align}
P(y \mid h) = \lambda P_{\text{knn}}(y \mid h) + (1 - \lambda) P_{\text{LM}}(y \mid h),    
\end{align}\vspace{-5mm}

After predicting token $y$, the datastore is filtered to successors with $y_i = y$, and pointers are used to advance to new hidden states $h_{i+1}$ forming the next candidate set $H_s$. Transitions are scored similar to \equationref{eq:pknn}:
\begin{align}
\theta(q, h, y) &= \sum_{(h_i, y_i) \in s_q} \mathbbm{1}_{y = y_i} \cdot \exp(-\text{dist}(h, h_i)), \end{align}\vspace{-7mm}
\begin{align}
   P_{\text{ret}}(y \mid h) &\propto \sum_{q \in s_q} \theta(q, h, y), 
\end{align}\vspace{-8mm}
\begin{align}
P(y \mid h) &= \lambda P_{\text{ret}}(y \mid h) + (1 - \lambda) P_{\text{LM}}(y \mid h).
\end{align}\vspace{-5mm}

\noindent $s_q = \emptyset$, a global fallback kNN search is used.

\noindent \textbf{\underline{Symbolic Memory as Swappable External Knowledge:}} The automaton-like structure enables interpretable, modular adaptation to new tasks. Datastores may be pruned, clustered differently, or constructed from task-specific corpora. This symbolic layer can be seen as a query-conditioned weighted automaton overlaying the language model's dynamics, enabling efficient memory, controllability, and symbolic introspection.

\begin{remark}
The clustering-based abstraction yields a finite state space \( Q \), making the induced symbolic component strictly regular. Thus, the Local RetoMaton recognizes a regular language over the vocabulary \( \Sigma \), grounded in the empirical transitions observed in the support corpus.
\end{remark}

\noindent \textbf{\underline{Efficiency Hypothesis:}} By restricting retrieval to locally reachable transitions rather than the full datastore, the Local RetoMaton induces a bounded memory policy. This aligns with finite-state approximability and improves query-time complexity from \( O(|D|) \) to \( O(k + |s_q|) \), with controllable tradeoffs via $k$ and cluster granularity.

\noindent \textbf{\underline{Local vs. Global Retrieval Conjecture:}} We conjecture that Local RetoMaton offers superior generalization and retrieval specificity over global retrieval mechanisms due to its structured symbolic memory. Readers are advised to look at Appendix \ref{app:lvsg} for detailed difference between global and proposed local RetoMaton. 

\begin{conjecture}[Local Retrieval Generalization Hypothesis]\label{conj:gold}
Let $\mathcal{D}_{\text{global}}$ be a global datastore of unstructured $(h, y)$ pairs and let $\mathcal{D}_{\text{local}}$ be the same set organized into a finite-state automaton $\mathcal{A} = (Q, \Sigma, \delta, \theta)$ as in the Local RetoMaton. Then for a query embedding $h_q$ and target distribution $P(y \mid h_q)$, there exists a temperature $\mathcal{T}$ and mixing weight $\lambda$ such that:\vspace{-5mm}
\begin{align}
    \text{KL}(P_{\text{gold}}(y \mid h_q) \| P_{\text{local}}(y \mid h_q)) < \text{KL}(P_{\text{gold}}(y \mid h_q) \| P_{\text{global}}(y \mid h_q)),
\end{align}\vspace{-8mm}

\noindent where $P_{\text{gold}}$ is the true continuation distribution and $P_{\text{local}}, P_{\text{global}}$ are predictions from the local and global datastores, respectively.
\end{conjecture}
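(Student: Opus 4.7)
The plan is to establish the conjecture as an existence result by constructing explicit choices of $\mathcal{T}$ and $\lambda$ that realize the inequality, rather than arguing it uniformly. The starting observation is that $P_{\text{local}}$ and $P_{\text{global}}$ are both interpolations of $P_{\text{LM}}$ with a retrieval distribution, and the local support set $s_q$ obtained through automaton traversal can be viewed as a context-filtered subset of the global nearest-neighbor set $\mathcal{N}_k(h_q)$: only candidates $(h_i, y_i)$ consistent with a transition $\delta(q, \cdot)$ out of the currently active cluster survive. This reframes the comparison as a question about whether pruning automaton-inconsistent neighbors concentrates mass more sharply on the gold continuation.

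First I would prove a pointwise lemma stating that, under the hypothesis that the gold next token $y^\star$ labels at least one surviving transition in $s_q$, the softmax-weighted mass assigned to $y^\star$ by $P_{\text{local}}$ weakly dominates that of $P_{\text{global}}$. The calculation is routine: for any normalized form $P(y \mid h_q) \propto \sum_{i \in S} w_i \mathbbm{1}_{y=y_i}$ with $w_i = \exp(-\|h_q - h_i\|^2 / \mathcal{T})$, removing indices $i \in S \setminus S'$ with $y_i \neq y^\star$ only shrinks the denominator without changing the numerator at $y^\star$. The remark establishing that the induced automaton is strictly regular and grounded in empirical transitions is what supplies exactly this filtering property.

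Next I would pick $\mathcal{T}$ small enough for the retrieval distributions to be sharply peaked on their respective supports, then expand $\text{KL}(P_{\text{gold}} \,\|\, \lambda P_{\text{ret}} + (1-\lambda) P_{\text{LM}})$ to first order in $\lambda$ around $\lambda = 0$. A standard calculation shows the leading derivative equals $1 - \mathbb{E}_{y \sim P_{\text{gold}}}[P_{\text{ret}}(y \mid h_q) / P_{\text{LM}}(y \mid h_q)]$, so the mixture that achieves the \emph{larger} expected likelihood ratio wins at small $\lambda$. The pointwise lemma upgrades to this expectation-level comparison under the assumption that $P_{\text{gold}}$ concentrates on $y^\star$, giving a strict inequality in a neighborhood of $\lambda = 0$, with the same $\mathcal{T}$ serving both retrievers.

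The hard part will be formalizing the hypothesis that the gold continuation is automaton-reachable, since it is ultimately an empirical property of how well the k-means clustering $Q$ captures transition structure in the support corpus. Without a condition on cluster quality — say, a margin between within-cluster and cross-cluster transition probabilities — one can contrive corpora where the automaton prunes $y^\star$ entirely and the global retriever wins. I would therefore state a clean \emph{coherence assumption} on $\mathcal{A}$ (every gold transition survives the clustering with probability at least $1-\varepsilon$ under the data distribution) under which the conjecture upgrades to a theorem, and defer a fully distributional version, where this condition holds with high probability over the corpus generating process and the randomness in clustering, to future work.
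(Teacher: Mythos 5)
The paper does not prove this statement; it is stated explicitly as a \emph{conjecture}, and the authors support it only empirically (Table~1's monotone decrease in PPL/KLD/NLL from global to domain-aligned to local) and with a formal \emph{groundwork} in Appendix~C: a support-set inclusion lemma $S(q,y)\subseteq S(q)\subseteq\mathcal{D}_{\text{triple}}$, a theorem that global retrieval is the $|Q|=1$ special case of local retrieval, a corollary that the distributions generically differ for $k>1$, and a description of the fallback hierarchy. None of this yields a KL bound; the paper's ``proof'' content for the conjecture is essentially ``the two distributions are distinct and local is a strict refinement, plus the empirics.'' So you are attempting to prove something the paper deliberately leaves open, which is a different enterprise altogether.

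On the merits of your attempt: the first-order expansion of $\text{KL}(P_{\text{gold}}\,\|\,\lambda P_{\text{ret}}+(1-\lambda)P_{\text{LM}})$ at $\lambda=0$ is computed correctly (the derivative is $1-\mathbb{E}_{y\sim P_{\text{gold}}}[P_{\text{ret}}(y\mid h_q)/P_{\text{LM}}(y\mid h_q)]$), and reducing the comparison to an expected-likelihood-ratio contest at small $\lambda$ is a legitimate way to exploit the existential quantifier over $\lambda$. The gap is in your ``pointwise lemma.'' You argue that passing from $\mathcal{N}_k(h_q)$ to the automaton-filtered set $s_q$ ``removes indices $i$ with $y_i\neq y^\star$, shrinking the denominator without changing the numerator at $y^\star$.'' But the automaton prunes by cluster membership and transition reachability, not by token label. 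Indices carrying the gold label $y^\star$ can be (and in general will be) pruned whenever their hidden states lie in clusters with no surviving transition, so the numerator at $y^\star$ shrinks as well. Your stated coherence assumption --- that at least one surviving transition in $s_q$ is labeled $y^\star$ --- only guarantees the numerator is nonzero; it does not guarantee the ratio improves. To make your lemma go through you would need the much stronger condition that \emph{all} gold-labeled neighbors survive the pruning (pruning is label-conservative), which is essentially assuming the conclusion. Your step from the pointwise claim to the expectation-level inequality also quietly requires $P_{\text{gold}}$ to concentrate on a single $y^\star$, which is an additional hypothesis not present in the conjecture. Finally, note that the paper's $P_{\text{ret}}$ is not literally a renormalized restriction of the global kernel sum --- it involves an inner sum over cluster-level $\theta(q,h,y)$ with its own normalization --- so identifying the two as ``same numerator, smaller denominator'' needs justification even before the label-conservation issue arises. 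The honest framing is what the paper itself adopts: state the coherence/margin condition as an explicit assumption, acknowledge that without it one can construct counterexamples (as you do), and treat the general statement as a conjecture.
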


This hypothesis, supported by our empirical observation, reflects the assumption that locality-aware symbolic organization reduces retrieval noise and improves alignment with latent predictive structure. Empirically, this can be tested by measuring perplexity or KL-divergence on held-out continuations from task-specific corpora.

\noindent \textbf{Hyperparameters:} $k$ (retrieval size), $\lambda$ (interpolation weight), and $\mathcal{T}$ (temperature).

Thus we transform a support corpus into a structured symbolic memory aligned with an underlying language model, bridging connectionist and symbolic reasoning in a seamless, theoretically grounded way (\figureref{fig:method}).\vspace{-5mm}

\section{Experiments}
\label{sec: exp}
The RetoMaton integrated NeSy LM is tested on three downstream NLP tasks: (1) Mathematical reasoning (2) General domain question answering (3) Reading Comprehension. In this section, our experiments demonstrate that we gain fine‐grained insight into the NeSy LM's generation process rendering responses explainable, actionable, transparent, and trustworthy; enabling domain adaptation and generalization; and ensuring reusability across tasks through a fail‐safe WFA based RetoMaton.

\noindent \textbf{\underline{Datasets}}
\begin{enumerate}[nosep]
    \item We evaluate mathematical reasoning using the \textbf{GSM8K} \citep{cobbe2021gsm8k} dataset, which comprises 8.5K high-quality grade school math problems requiring multi-step reasoning with elementary arithmetic operations. Model performance is assessed on the \textit{test} split using \textbf{accuracy} as the evaluation metric.

    \item The \textbf{MMLU} \citep{hendryckstest2021} benchmark includes 57 diverse tasks designed to assess both domain knowledge and problem-solving capabilities. We report performance on the official \textit{test} set using \textbf{accuracy} as the primary evaluation metric.
    \item The \textbf{TriviaQA} \citep{joshi2017triviaqa} dataset evaluates reading comprehension by presenting question–answer pairs authored by trivia enthusiasts, each accompanied by independently gathered supporting evidence. This includes both \texttt{wiki\_context} and \texttt{search\_results} fields. We use the \textit{validation} split for evaluation. Given the potential variability in phrasing, we report both \textbf{Exact Match (EM)} and \textbf{F1 score} to capture fully and partially correct responses.

\end{enumerate}
\vspace{1mm}

\noindent \textbf{\underline{Experimental Setup}}
We primarily conducted our evaluations using 1B-parameter models: \textbf{LLaMA-3.2-1B} \citep{grattafiori2024llama} and \textbf{Gemma-3-1B} \citep{team2025gemma}. For each task, we used the best available snapshot of the model. Reading comprehension and general domain question answering were evaluated using \textbf{LLaMA-3.2-1B} and \textbf{Gemma-3-1B-PT}. For mathematical reasoning, we used the instruction-tuned versions \textbf{LLaMA-3.2-1B-Instruct} and \textbf{Gemma-3-1B-IT.}
We employ a 5-shot ICL prompt for both the TriviaQA and MMLU datasets. For GSM8K, we use an 8-shot prompt with the LLaMA model and a 5-shot prompt with the Gemma model. All prompts are adapted from the publicly available LLaMA-Eval\footnote{ Available on HuggingFace: \url{https://huggingface.co/datasets/meta-llama/Llama-3.1-8B-evals} \url{https://huggingface.co/datasets/meta-llama/Llama-3.1-8B-Instruct-evals}} benchmark suite to ensure consistency. For constructing the datastore and implementing the automaton-based retrieval infrastructure, we adapted code from Uri Alon’s public implementation of RetoMaton\footnote{\url{https://github.com/neulab/knn-transformers}}, which uses FAISS \citep{johnson2019billion} for efficient similarity search. We experiment with hyperparameters for RetoMaton interpolation and retrieval using: $\lambda \in (0.1, 0.15, 0.2, 0.25)$, the number of nearest neighbors $k \in (1024, 512, 256)$, and temperature $\mathcal{T} \in (1, 0.95, 0.9,0.85,0.8)$, as detailed in Section~\ref{sec:approach}. We conducted a grid search over these hyperparameters for each downstream task and report the best performance in our experiments. For decoding, we use a beam size of 5 and set maximum generation lengths to 10 tokens for MMLU, 175 tokens for GSM8K and 24 tokens for TriviaQA.

\begin{figure}[htbp]
\floatconts
    {fig:AllRet}
    {\caption{Comparison of downstream performance on MMLU (accuracy), GSM8K (accuracy) and TriviaQA (Exact Match \& F1) for the baseline LLaMA model and its integrations with Global, Domain-Aligned, and Local RetoMata demonstrating that Local RetoMaton consistently delivers the highest performance.} }
    {\includegraphics[trim=80 220 80 220,width=\linewidth]
    {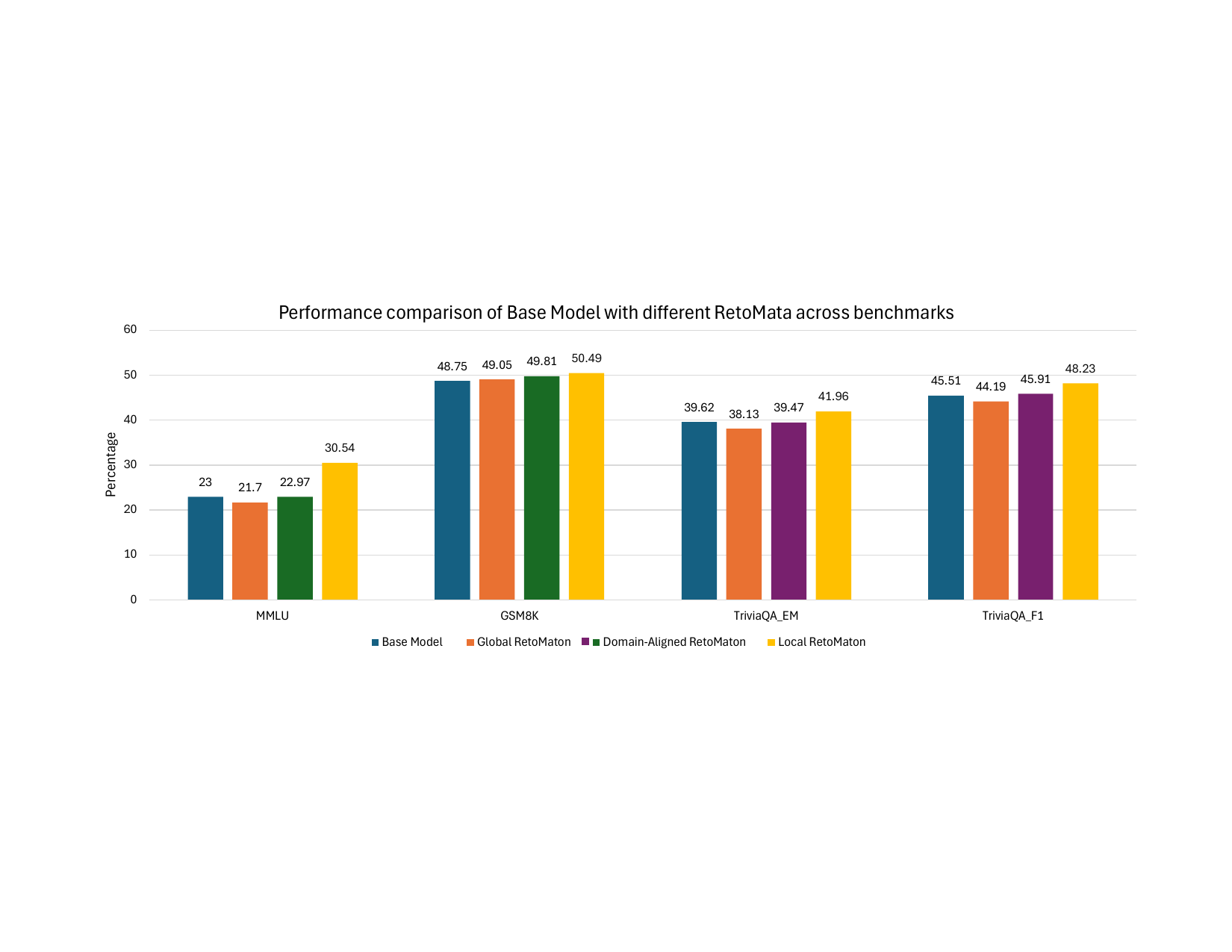}}
\end{figure}


\noindent \textbf{\underline{Global RetoMaton}}
To explore the impact of grounding an LLM with external memory structured with a WFA, we constructed the RetoMaton from the WikiText \citep{merity2016pointer} benchmark which we refer to as the Wiki Global RetoMaton. The WikiText dataset comprises well-curated, factually accurate, and broad domain Wikipedia articles that can support all downstream tasks. We evaluate the Global RetoMaton using a subset of the TriviaQA validation dataset and entire test splits of MMLU and GSM8K. The downstream performance of the LLaMA model augmented with the Global RetoMaton is presented in \figureref{fig:AllRet}.

\begin{wrapfigure}{r}{0.47\textwidth}
\floatconts
  {fig:GlobalRet}
  {\caption{Text demonstrating retrieved RetoMaton entries from WikiText used for debugging, showing a TriviaQA question, ouput, and neighbors (enclosed in double quotes) along with their preceding context.}}
  {%
    \fbox{%
    \begin{minipage}{\linewidth}
\small
Question 1: What was the name of Michael Jackson's autobiography written in 1988?
\\\\Output: Moonwalk \textcolor{green}{(Correct Response)}
\\\\Neighbors:
\\1. 1988, Jackson released his only autobiography,`` Moon"
\\2. 1988, Jackson released his only autobiography, Moon``walk"
\\\\\\Question 2: In which decade did stereo records first go on sale?
\\\\Output: 40s \textcolor{red}{(Incorrect Response)}
\\\\Neighbors:
\\1. too, or'30s and'``40"
\\2. style of the great soul ballads of the ``60"
    \end{minipage}%
  }
}
\end{wrapfigure}

Although integration of the Global RetoMaton shows no performance improvement across downstream tasks, using the symbolic component of the NeSy pipeline, we traced the nearest neighbors along the traversed paths and visualized the results in \figureref{fig:GlobalRet}. Additional traces provided in \appendixref{sec:WikiG} demonstrate generalization on GSM8K. This integration of external knowledge helped the model \textbf{generalize}, although it led to a slight drop in overall performance. Specifically, for GSM8K questions, 51\% of decoding steps invoked new kNN searches and on TriviaQA this occurred in 64\% of steps because when no valid paths remained, the model fell back on these searches as a \textbf{fail-safe} mechanism. By combining the RetoMaton’s symbolic tracing with the LM, we achieved \textbf{explainable, transparent, and interpretable} responses, with WFA's path traversal providing truly \textbf{fine-grained} insights into the generation process. Additionally, once set up, the RetoMaton can be \textbf{repurposed across multiple tasks}, enabling efficient deployment. Building on these findings, we hypothesized that utilizing a corpus more closely aligned with the target domain would further enhance performance on downstream tasks.

\noindent \textbf{\underline{Domain-Aligned RetoMaton}}
To support reading comprehension, we constructed a RetoMaton from TriviaQA’s evidence documents; for mathematical reasoning on GSM8K and the math domain of MMLU, we built a math-centric RetoMaton using the MathPile dataset \citep{wang2024mathpile} extracted by \cite{shi2022knn} and \cite{kim2024interpretable}. 
The results in \figureref{fig:AllRet} demonstrate that domain-aligned RetoMata outperform the Global RetoMaton across downstream tasks and, by revealing fine-grained insights into the generation process, guide \textbf{actionable} improvements. To push performance even further and explore whether a more tightly aligned distribution can surpass our baselines we now turn to the Local RetoMaton.
\noindent \textbf{\underline{Local RetoMaton}}
To draw a parallel with fine-tuning where model weights are updated on task-specific subsets we built datastores from the training splits of MMLU and GSM8K and, for TriviaQA, created a RetoMaton for each individual query. However, unlike fine-tuning, this process requires no parametric updates. While providing task-specific data, we constrain the scope to only the most relevant context, resulting in retomata that are significantly smaller in size when loaded into memory compared to a Global and Domain Specific RetoMata, making them more efficient to work with during inference. Retomaton information is provided in \appendixref{sec:datstores}.  

Consistent with our hypothesis that locality-aware symbolic organization reduces retrieval noise and better aligns with the model’s latent predictive structure, our empirical evaluation on k-shot examples shows a steady improvement from global to domain-aligned to local knowledge injection. Using the best-performing hyperparameters of the RetoMata on GSM8K, we measured Perplexity, KL divergence and negative log-likelihood by performing an evaluation pass and observed a clear, monotonic decrease across these strategies. As summarized in \tableref{tab:kld}, the local retomata achieve the lowest values (PPL=2.7787;
KLD = 0.0359; NLL = 1.0193), indicating that more task-localized symbolic datastores yield more precise, better-calibrated predictions. In \appendixref{app:lvsg}, we provide a formal analysis of Local RetoMaton’s performance gains over Global RetoMaton and examine how clustering choices affect an LLM's performance.
\begin{table}[htbp]
\floatconts
    {tab:kld}
    {\caption{Comparison of perplexity (PPL), KL-Divergence (KLD), and negative log-likelihood (NLL) for the LLaMA model integrated with global, domain-aligned, and local datastores on GSM8K. The Local RetoMaton consistently achieves the lowest values across all metrics, indicating more accurate and better-aligned predictions.}}
    {\begin{tabular}{lccc}
    \toprule
    & \bfseries {PPL} &\bfseries {KLD}&\bfseries {NLL}\\
    \midrule
    Global RetoMaton & 4.0974 & 0.07466 & 1.3675\\
    Domain-Aligned RetoMaton & 3.6424 & 0.0534 & 1.2531\\
    Local RetoMaton & \textbf{2.7787} & \textbf{0.0359} & \textbf{1.0193}\\
    \bottomrule
\end{tabular}}
\end{table}

\vspace{-3mm}
\noindent \textbf{\underline{Cross-Model Evaluation}}
To assess the generalizability of our Local RetoMaton, we pair it with Gemma-3-1B language model and measure downstream task performance by integrating it with the Local RetoMaton. The results are demonstrated in \figureref{fig:GemmaRet}.  The resulting gains on GSM8K and TriviaQA datasets mirror those observed previously, demonstrating that the improvements stem from the RetoMaton’s symbolic component and are not tied to a specific model. Additionally, we have included the symbolic memory traces in \figureref{fig:explain}.

\begin{figure}[htbp]
\floatconts
    {fig:GemmaRet}
    {\caption{Downstream performance on MMLU (accuracy), GSM8K (accuracy) and TriviaQA (Exact Match \& F1) for Gemma versus Local RetoMaton illustrating that the observed gains are attributable to the Local RetoMaton rather than model‐specific effects. } }
    {\includegraphics[trim=70 180 70 160, width=0.70\linewidth]
    {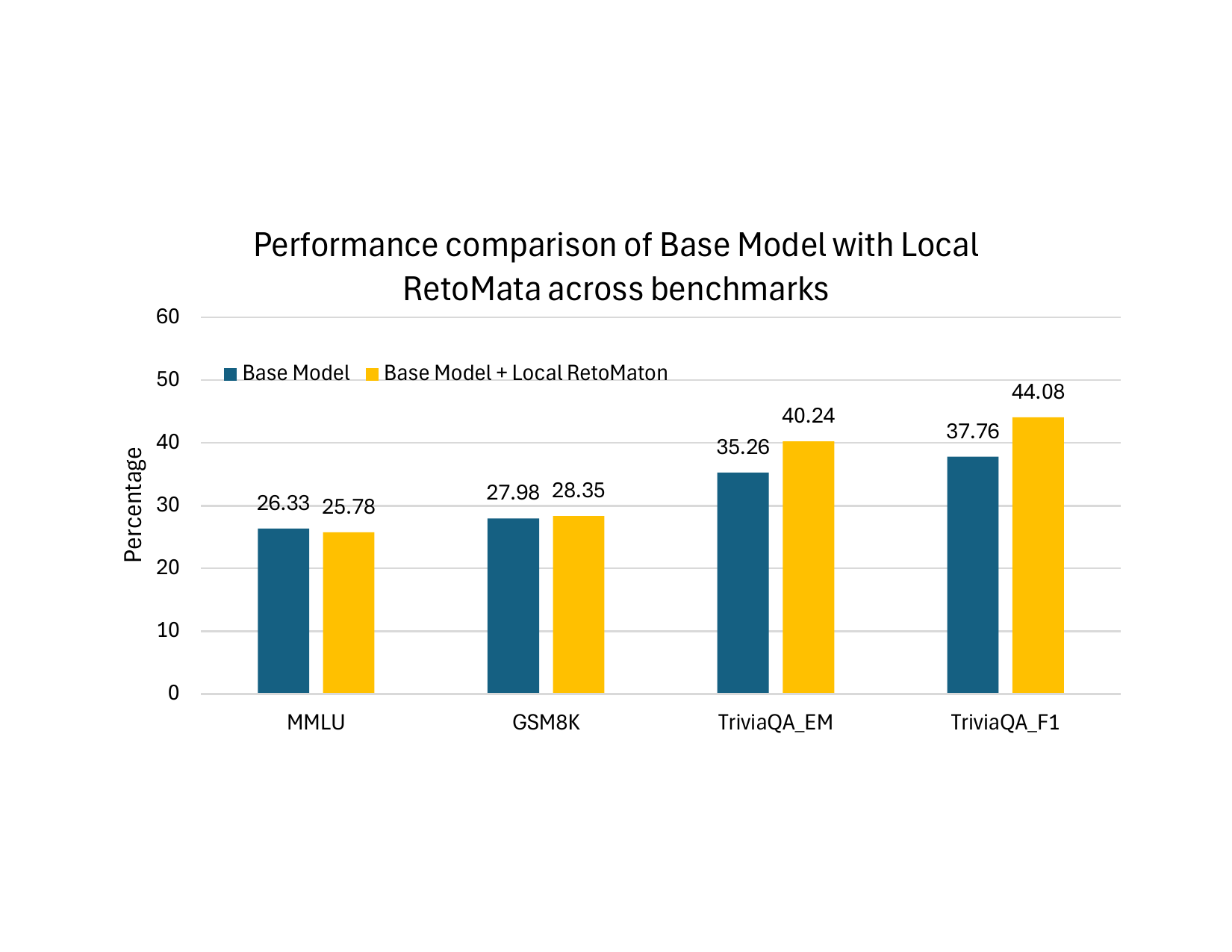}}
\end{figure}

\vspace{-5mm}
\section*{Discussion and Conclusion}
\vspace{-2mm}
We introduced \textbf{Local RetoMaton}, a neuro-symbolic augmentation mechanism that equips language models with \textit{automaton-guided symbolic memory} enabling structured, interpretable, and context-sensitive reasoning. Unlike conventional prompting techniques, which rely solely on transient activations, RetoMaton integrates a persistent memory layer built as a weighted finite-state automaton over a local KNN-LM datastore. The creation process of this external memory is entirely unsupervised and does not require any parametric updates unlike finetuning. This memory structure provides \textit{explicit control over retrieval paths}, allowing each inference step to be traced, understood, and manipulated. Our experiments across mathematical reasoning, question answering, and reading comprehension demonstrate that even compact models (e.g., 1B-parameter LLaMA and Gemma) benefit substantially from symbolic augmentation yielding both improved accuracy and introspectability. RetoMaton complements prompting strategies such as in-context learning (ICL) and chain-of-thought (CoT), offering a persistent memory backbone that grounds token-level predictions in task-aligned knowledge. Nonetheless, challenges persist in high-variance, heterogeneous settings like MMLU, where models often exhibit biased or default behavior. While RetoMaton imposes structure on retrieval, it cannot alone override biases embedded during pretraining. This suggests the need for adaptive symbolic scaffolding or hybrid corrective mechanisms to ensure faithful reasoning in open-domain tasks.
Looking forward, we will investigate three key dimensions: (1) the effect of model scale on the integration of symbolic memory, where larger models may utilize structured retrieval more efficiently, (2) the generality of RetoMaton across diverse NLP tasks such as summarization, fact verification, and open-domain generation and (3) across diverse architectures like State Space Models and Mixture of Expert Models. We anticipate that symbolic augmentation will be particularly valuable for smaller or resource-efficient models, where external structure can compensate for limited internal abstraction. Ultimately, this work advances a concrete step toward \textbf{interpretable and controllable language models} grounded in the emerging paradigm of \textit{Neuro-Symbolic AI} using the Local RetoMaton framework.

\bibliography{nesy2025-sample}
\newpage
\appendix
\section{Supplementary Retomaton Artifacts and Statistics}

This appendix gathers all supplementary materials for our Retomaton experiments. First, we present the raw execution traces to illustrate workflow dynamics. Next, we list the exact prompts used to configure Local Retomata. We then report key statistics, namely token counts and space information for each setup. Finally, we formalize Global and Local retrieval strategies along with the impact of the cluster coefficient on LLM's performance.

\subsection{Wiki Global RetoMaton}
\label{sec:WikiG}
\figureref{fig:GSMGlobal} illustrates several hidden representations traversed by LLaMA when integrated with the Global RetoMaton. By incorporating the Global RetoMaton’s WikiText-derived knowledge, the model can leverage existing information to generalize more effectively to previously unseen inputs.

\begin{figure}[htbp]
\floatconts
    {fig:GSMGlobal}
    {\caption{Text demonstrating retrieved RetoMaton entries from WikiText used for debugging, showing the GSM8K question, ouput, and neighbors (enclosed in double quotes) along with their preceding context.}}
    {\centering
\fbox{
  \begin{minipage}{0.8\textwidth}
Input Question: Eliza's rate per hour for the first 40 hours she works each week is \$10. She also receives an overtime pay of 1.2 times her regular hourly rate. If Eliza worked for 45 hours this week, how much are her earnings for this week?
\\\\Output: Eliza's regular rate is \$10 per hour. For 40 hours, she earns 40 x 10 = 400 dollars. For 5 hours of overtime, she earns 5 x 10 x 1.2 = 60 dollars. 400 + 60 = 460. The final answer is 460 \textcolor{green}{(Correct Response)}
\\\\Neighbors indicated in double quotes along with their preceding context from WikiText:
\\1. who used the card to purchase one or more \$`` "
\\2. workweeks averaging between 70 and 84 hours`` per"
\\3. limit may be reduced ( 60 / `` ="
\\4. 10 minutes will consume 40 x 6`` x"\\
\end{minipage}
}}
\end{figure}

\subsection{Symbolic Memory Trace}
During generation, RetoMaton dynamically consults its symbolic memory. At each decoding step, RetoMaton retrieves representations from the WFA whose preceding hidden states are semantically close to the current context. The retrieved entries are then used to guide the next-token prediction. \figureref{fig:explain} shows the top two datastore entries retrieved by Gemma model at each decoding timestep and color-coded with annotations provided for clarity.
\begin{figure}[hbtp]
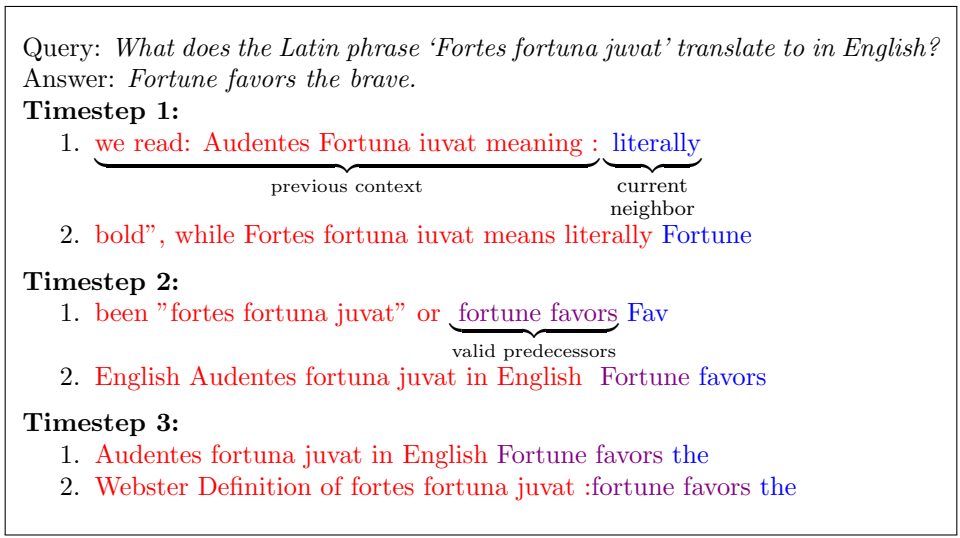

\floatconts
{fig:explain}
{\caption{Illustration of Symbolic Memory-Based Explainability: Retrieved neighbor tokens and their preceding contexts (annotated within) from the local datastore, shown during inference with the Gemma model integrated with the Local RetoMaton on the TriviaQA benchmark.}}
{\small{
    \centering
\fbox{
  \begin{minipage}{0.8\textwidth}
\vspace{3mm}
Query: \textit{What does the Latin phrase ‘Fortes fortuna juvat’ translate to in English?}
Answer: \textit{Fortune favors the brave.}
\\\textbf{Timestep 1:}
    \begin{enumerate}[nosep]
   \item $\underbrace{\textcolor{red}{\text{we read: Audentes Fortuna iuvat meaning :}}}_{\text{previous context}} 
\underbrace{\textcolor{blue}{\text{ literally}}}_{\scriptsize \shortstack{\text{current} \\ \text{neighbor}}}$
    \item \textcolor{red}{bold", while Fortes fortuna iuvat means literally} \textcolor{blue}{ Fortune}
\end{enumerate}\vspace{2mm}

\textbf{Timestep 2:}
\begin{enumerate}[nosep]
    \item \textcolor{red}{been "fortes fortuna juvat" or}
    $\underbrace{\textcolor{violet}{\text{ fortune favors}}}_{\text{valid predecessors}}$
    \textcolor{blue}{ Fav}

    \item \textcolor{red}{English Audentes fortuna juvat in English } \textcolor{violet}{Fortune} \textcolor{blue}{ favors} 
\end{enumerate}\vspace{2mm}
\textbf{Timestep 3:}
\begin{enumerate}[nosep]
    \item \textcolor{red}{Audentes fortuna juvat in English} \textcolor{violet}{ Fortune favors} \textcolor{blue}{ the}

    \item \textcolor{red}{Webster Definition of fortes fortuna juvat :}\textcolor{violet}{ fortune favors} \textcolor{blue}{ the}\\
\end{enumerate}
\end{minipage}
}
}}
\end{figure}

\subsection{Prompts}

To setup the Local RetoMata for GSM8K and MMLU benchmarks, we captured hidden representations and next-token pairs from the training split, which consisted of zero-shot formatted examples, into an IVFPQ Faiss index. \figureref{fig:MMLUcode} shows the input prompt structure used for constructing the Local RetoMaton for the MMLU benchmark with both models. \figureref{fig:GSMcode} shows the input prompt structures used to build the Local RetoMata for GSM8K using the LLaMa and Gemma models.

\begin{figure}[hbtp]
    \floatconts
    {fig:MMLUcode}
    {\caption{MMLU Input Format Used for Setting Up the Local RetoMaton}}
    {\centering
    \fbox{
        \makebox[0.92\linewidth][c]{
            \includegraphics[trim=150 570 150 70,width=0.6\linewidth]{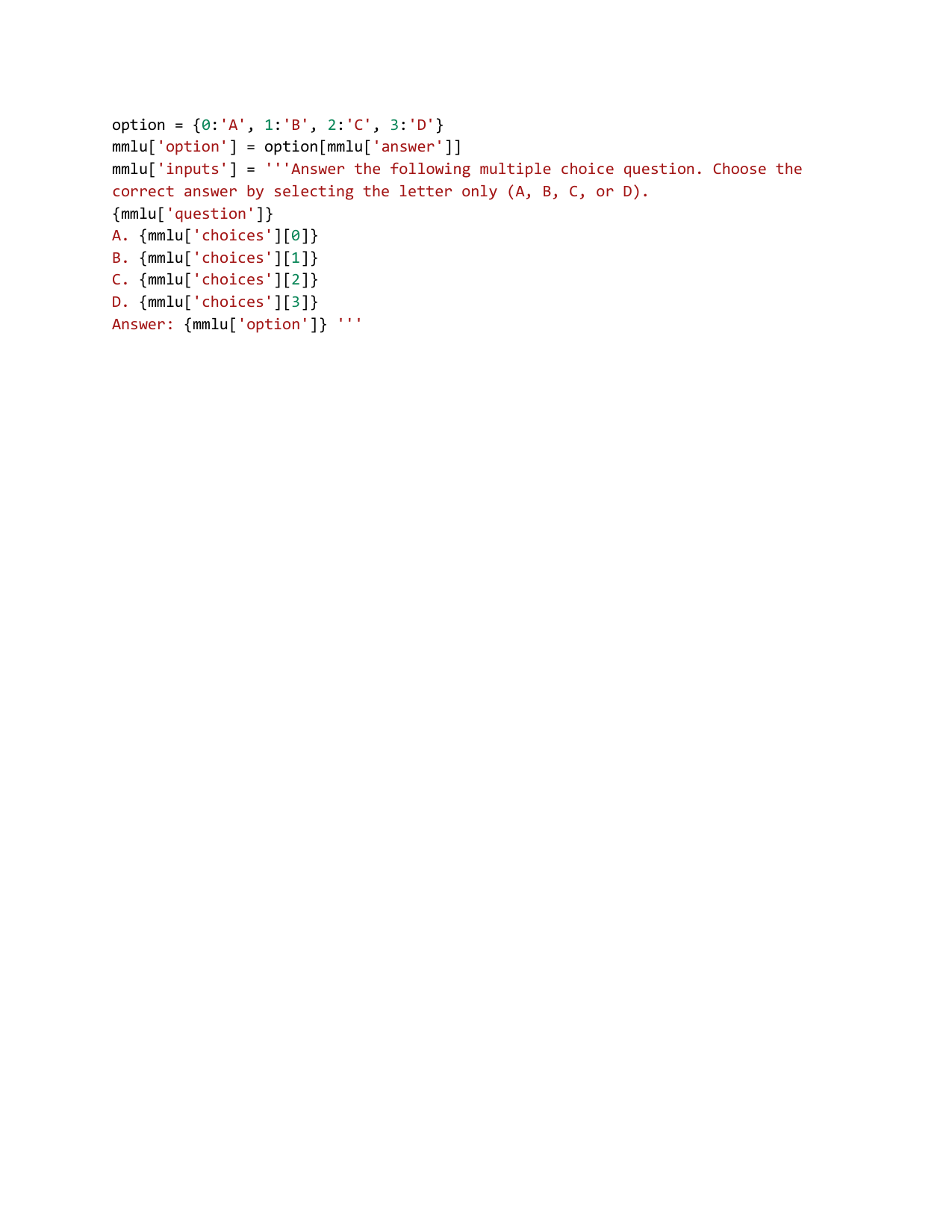}
        }
    }}
\end{figure}

\begin{figure}[hbtp]
    \floatconts
    {fig:GSMcode}
    {\caption {GSM8K Input Format Used for Setting Up the Local RetoMaton with both LLaMa and Gemma models}}
    {\centering
    \fbox{
        \makebox[0.92\linewidth][c]{
            \includegraphics[trim=155 440 150 70, width=0.6\linewidth]{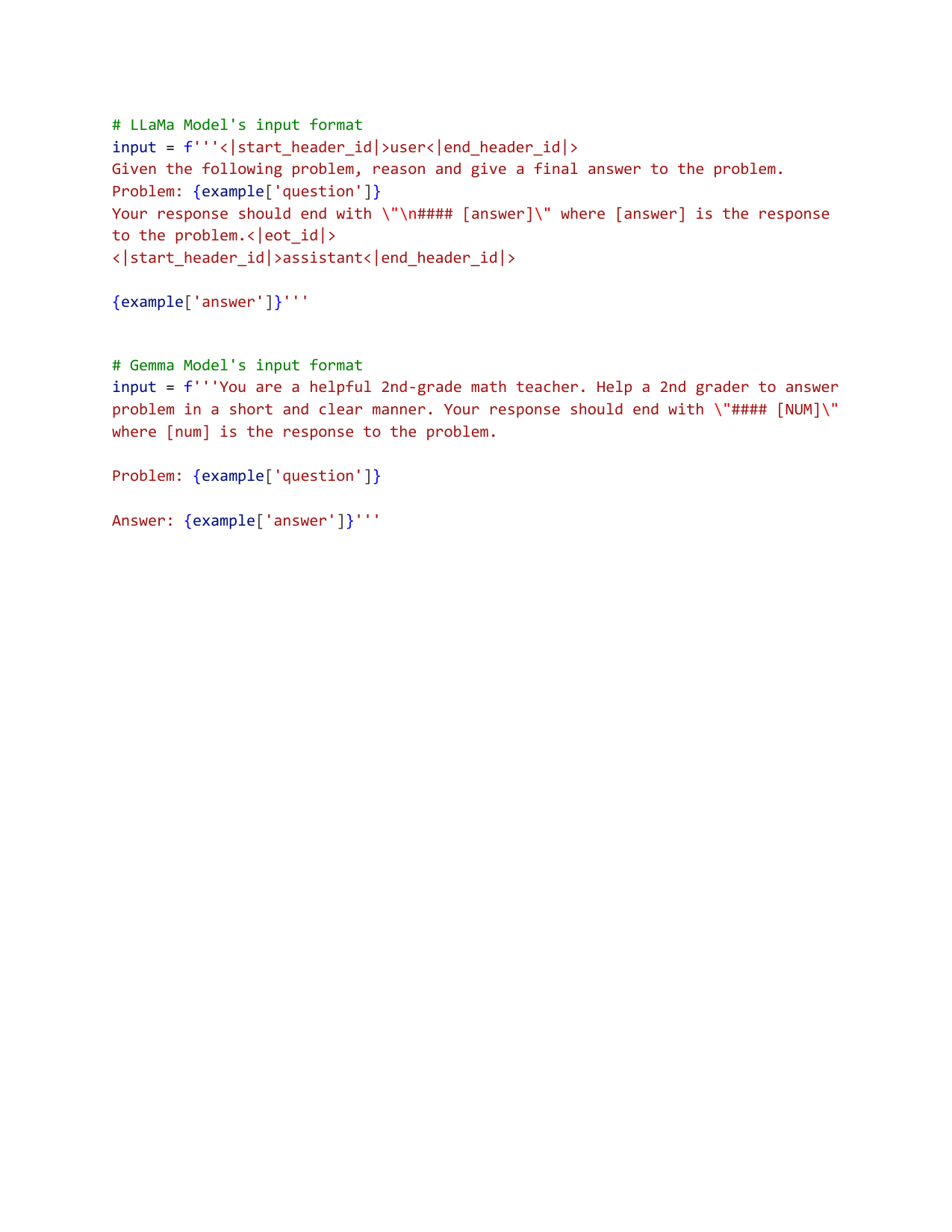}%
        }
    }  }
\end{figure}



\subsection{Data Stores Statistics}
\label{sec:datstores}

The datastore details are summarized in \tableref{tab:Datastores}, including the source text used for population, the number of tokens in each datastore, and the corresponding disk space occupied. Note that the TriviaQA datastore was built using only a 5,000 example subset of the dataset. The graph in \figureref{fig:triviaDstore} shows the number of query-specific datastores constructed for TriviaQA, grouped by their respective size in megabytes (MB). While the 5k subset of TriviaQA resulted in a single datastore of size 9.8 GB, the query-specific RetoMata are significantly more lightweight, with each individual datastore being under 128 MB.

\begin{table}[htbp]
\floatconts
    {tab:Datastores}
    {\caption{Overview of RetoMata Datastores with Corresponding Token Counts and Disk Space}}
    {\begin{tabular}{lccc}
    \toprule
    \bfseries {Data corpus} &\bfseries {\# of Tokens}&\bfseries {Size}\\
    \midrule
   Wikitext-103 & 121M & 8.13GB\\
    MathPile & 187.2M & 12.57GB\\
    TriviaQA & 146.9M & 9.87GB\\
    MMLU & 38M & 2.57GB\\
    GSM8K & 1.5M & 0.12GB\\
    \bottomrule
\end{tabular}}
\end{table}

\begin{figure}[hbtp]
\floatconts
    {fig:triviaDstore}
    {\caption{Distribution of file sizes for TriviaQA’s query-specific datastores, showing that they are significantly more lightweight than global or domain-aligned indexes.}}
    {\includegraphics[trim=150 170 150 160,width=0.6\linewidth]{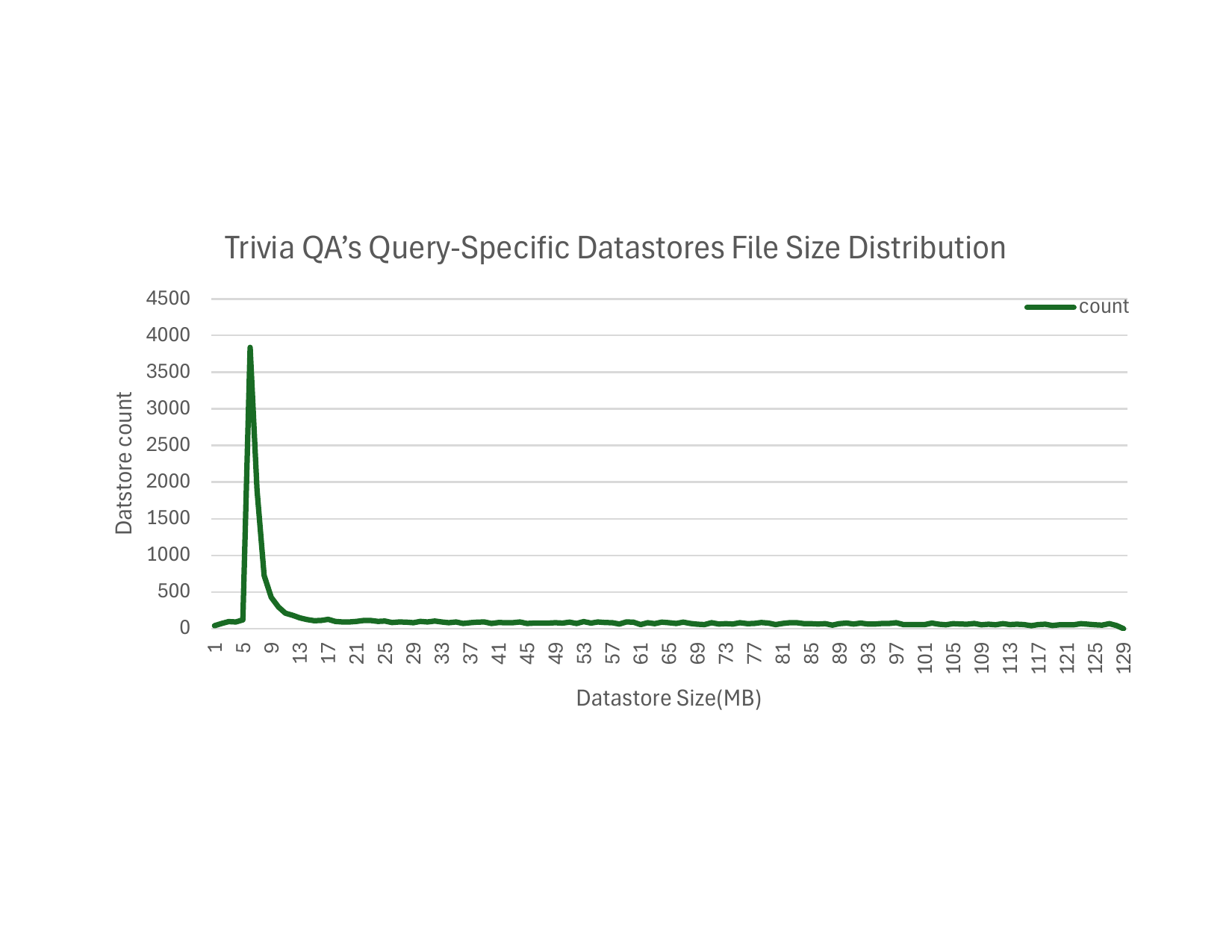}}
\end{figure}

\section{Supplemental Results}\label{apd:first}
\subsection{In-Context Learning Experiments}

We conducted ICL experiments using the LLaMA model. We evaluated 3-shot, 5-shot, and 7-shot configurations using exemplar selection based on cosine similarity with the sentence-transformers/all-mpnet-base-v2 model\citep{song2020mpnet} adapting code from \cite{gupta-etal-2023-coverage}. Additionally, we included an 8-shot setup for GSM8K to match with RetoMaton experiments. ICL underperforms across tasks, in the case of MMLU largely due to domain mismatch in retrieved exemplars. This limitation highlights the value of our proposed approach that not only enhances performance but helps ground prompting startegies by providing a trustworthy and interpretable retrieval through structured symbolic memory. \tableref{tab:ICL} reports the performance of the LLaMA model on downstream tasks using ICL, where the demonstration examples are selected based on cosine similarity computed over sentence embeddings. 

\begin{table}[hbtp]
\floatconts
    {tab:ICL}
    {\caption{LLaMa Performance on Downstream Tasks: Optimal Prompts from the LLaMa Prompting Suite vs. k-Shot In-Context Learning}}
    {\begin{tabular}{lcccc}
    \toprule
    & \multicolumn{2}{c}{\bfseries {TriviaQA}} &\bfseries {MMLU}&\bfseries {GSM8K}\\
               & \bfseries {Exact Match} & \bfseries {F1} &\bfseries {Accuracy}&\bfseries {Accuracy}\\
    \midrule
    Local RetoMaton & \textbf{41.96} & \textbf{48.23} & \textbf{30.54} & \textbf{50.49} \\
    LLaMa & 39.62 & 45.51 & 23 &48.75\\
    3-shot  & 32.03 & 37.80 &26.54&30.78\\ 
    5-shot  & 30.33 & 35.83&26.80&30.93\\
    7-shot  & 28.90 & 34.37&26.68&31.99\\
    8-shot  &-&-&-&31.91\\
    \bottomrule
\end{tabular}}
\end{table}

\section{Mathematical Theory: Global vs.\ Local RetoMaton Retrieval}
\label{app:lvsg}
\subsection{Setup and Definitions}

Let $\Sigma$ be a finite alphabet. Let $D = \bigcup_{m=1}^M D^{(m)}$ be a dataset of $M$ sequences, each
\[
D^{(m)} = \left\{ (h_1^{(m)}, y_1^{(m)}), \ldots, (h_{n_m}^{(m)}, y_{n_m}^{(m)}) \right\}
\]
where $h_i^{(m)} \in \mathbb{R}^d$, $y_i^{(m)} \in \Sigma$. 

Fix a clustering function $C: \mathbb{R}^d \to Q$ for some finite set $Q = \{q_1, \ldots, q_k\}$, and let $q_i^{(m)} := C(h_i^{(m)})$ be the cluster assignment for each hidden state.

\textbf{Key Design Choice:} All retrieval methods operate only on valid transitions, excluding sequence endpoints.

Define the empirical set of memory triples (valid transitions only):
\[
\mathcal{D}_{\text{triple}} := \left\{ (h_i^{(m)}, y_i^{(m)}, q_{i+1}^{(m)}) : 1 \leq m \leq M,\ 1 \leq i < n_m \right\}
\]
where $q_{i+1}^{(m)} := C(h_{i+1}^{(m)})$ is the cluster of the \emph{next} hidden state.

For each $q \in Q$ and $y \in \Sigma$, define:
\[
S(q) := \{ (h_i^{(m)}, y_i^{(m)}, q_{i+1}^{(m)}) \in \mathcal{D}_{\text{triple}} : q_i^{(m)} = q \}
\]
\[
S(q, y) := \{ (h_i^{(m)}, y_i^{(m)}, q_{i+1}^{(m)}) \in S(q) : y_i^{(m)} = y \}
\]

Let $K: \mathbb{R}^d \times \mathbb{R}^d \to \mathbb{R}_{\geq 0}$ be a similarity kernel.

\subsection{Retrieval Probabilities}

For any query $h \in \mathbb{R}^d$, let $q := C(h)$. Define the retrieval probabilities as:

\paragraph{Global RetoMaton:}
\[
P_{\mathrm{global}}(y \mid h) := \begin{cases}
\frac{ \sum_{(h_i, y_i, q') \in \mathcal{D}_{\text{triple}}} \mathbbm{1}_{y = y_i}\, K(h, h_i) }{ \sum_{(h_i, y_i, q') \in \mathcal{D}_{\text{triple}}} K(h, h_i) } & \text{if denominator} > 0 \\
P_{\mathrm{knn}}(y \mid h) & \text{otherwise}
\end{cases}
\]
where $P_{\mathrm{knn}}(y \mid h)$ is standard $k$-nearest neighbor retrieval over all memories without kernel weighting.

\paragraph{Local RetoMaton (Cluster-based):}
\[
P_{\mathrm{local}}^{\mathrm{cluster}}(y \mid h) := \begin{cases}
\frac{ \sum_{(h_i, y_i, q') \in S(q)} \mathbbm{1}_{y = y_i}\, K(h, h_i) }{ \sum_{(h_i, y_i, q') \in S(q)} K(h, h_i) } & \text{if } S(q) \neq \emptyset \text{ and denominator} > 0 \\
P_{\mathrm{global}}(y \mid h) & \text{otherwise}
\end{cases}
\]

\paragraph{Local RetoMaton (Automaton-constrained):}
For each token $y$ individually:
\[
P_{\mathrm{local}}^{\mathrm{aut}}(y \mid h) := \begin{cases}
\frac{ \sum_{(h_i, y_i, q') \in S(q, y)} K(h, h_i) }{ \sum_{(h_i, y_i, q') \in S(q)} K(h, h_i) } & \text{if } S(q, y) \neq \emptyset \text{ and } S(q) \neq \emptyset \\
P_{\mathrm{local}}^{\mathrm{cluster}}(y \mid h) & \text{if } S(q, y) = \emptyset \text{ but } S(q) \neq \emptyset \\
P_{\mathrm{global}}(y \mid h) & \text{if } S(q) = \emptyset
\end{cases}
\]

\textbf{Note:} In automaton-constrained retrieval, if $S(q,y) = \emptyset$ (no empirical evidence for token $y$ from state $q$), we fall back to cluster-based retrieval for that specific token $y$.

\subsection{Main Lemma: Set Inclusion}

\begin{lemma}[Support Set Inclusion]
For all $q \in Q$ and $y \in \Sigma$:
\[
S(q, y) \subseteq S(q) \subseteq \mathcal{D}_{\text{triple}}
\]
Moreover, $\mathcal{D}_{\text{triple}} = \bigsqcup_{q \in Q} S(q)$ and $S(q) = \bigsqcup_{y \in \Sigma} S(q, y)$ are disjoint unions.
\end{lemma}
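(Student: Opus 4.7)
The plan is to unpack the set-builder definitions and verify (i) the chain of inclusions and (ii) the two partition claims, in that order. Both parts reduce to elementary set-theoretic bookkeeping once we record that $C:\mathbb{R}^d\to Q$ is a function and that every triple in $\mathcal{D}_{\text{triple}}$ carries exactly one emitted token label $y_i^{(m)}\in\Sigma$.

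First I would establish the inclusions. From the definition
$S(q)=\{(h_i^{(m)},y_i^{(m)},q_{i+1}^{(m)})\in\mathcal{D}_{\text{triple}} : q_i^{(m)}=q\}$
one reads off directly that $S(q)\subseteq\mathcal{D}_{\text{triple}}$, since membership in $S(q)$ requires membership in $\mathcal{D}_{\text{triple}}$. Similarly, $S(q,y)$ is defined by appending the constraint $y_i^{(m)}=y$ to membership in $S(q)$, so $S(q,y)\subseteq S(q)$. Chaining these gives the claimed inclusions $S(q,y)\subseteq S(q)\subseteq\mathcal{D}_{\text{triple}}$ for all $q\in Q$ and $y\in\Sigma$.

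Next I would prove $\mathcal{D}_{\text{triple}}=\bigsqcup_{q\in Q}S(q)$. For coverage, fix any triple $t=(h_i^{(m)},y_i^{(m)},q_{i+1}^{(m)})\in\mathcal{D}_{\text{triple}}$; since $C$ is total, $q_i^{(m)}:=C(h_i^{(m)})$ exists in $Q$ and $t\in S(q_i^{(m)})$, giving $\mathcal{D}_{\text{triple}}\subseteq\bigcup_{q}S(q)$; the reverse inclusion is immediate from the first step. For disjointness, suppose $t\in S(q)\cap S(q')$. Then the defining predicates force $q_i^{(m)}=q$ and $q_i^{(m)}=q'$, and because $C$ is well-defined (single-valued) we conclude $q=q'$. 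An analogous argument, applied with the predicate $y_i^{(m)}=y$ in place of $q_i^{(m)}=q$, gives $S(q)=\bigsqcup_{y\in\Sigma}S(q,y)$: every $t\in S(q)$ has a unique emitted token $y_i^{(m)}\in\Sigma$, placing it in exactly one $S(q,y)$.

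There is really no hard step here; the only delicate point is being explicit that $t$ denotes a specific tuple tagged with a specific index $(m,i)$, so the cluster assignment $q_i^{(m)}$ and label $y_i^{(m)}$ are unambiguous. If the triples were instead taken as an unindexed multiset identified by value alone, one would need to address coincidences where two distinct positions produce the same $(h,y,q')$; I would therefore clarify at the outset that $\mathcal{D}_{\text{triple}}$ is an indexed family over $\{(m,i) : 1\le m\le M,\,1\le i<n_m\}$, after which the partition claims follow by the functionality of $C$ and the single-valuedness of the label map.
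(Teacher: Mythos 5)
Your proof is correct and follows essentially the same route as the paper's: the inclusions are read directly off the set-builder definitions, and the two partition claims follow from the functionality of $C$ (each triple has a unique cluster $q_i^{(m)}$) and the single-valuedness of the token label $y_i^{(m)}$. Your closing remark about treating $\mathcal{D}_{\text{triple}}$ as an indexed family rather than a value-identified multiset is a small but genuine tightening that the paper leaves implicit; it does not change the argument but does preempt a potential objection when distinct positions $(m,i)$ happen to yield identical triples.
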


\begin{proof}
By definition, $S(q, y)$ consists of triples in $S(q)$ with the additional constraint $y_i^{(m)} = y$, so $S(q, y) \subseteq S(q)$. Similarly, $S(q)$ consists of triples in $\mathcal{D}_{\text{triple}}$ with $q_i^{(m)} = q$, so $S(q) \subseteq \mathcal{D}_{\text{triple}}$.

Every triple $(h_i^{(m)}, y_i^{(m)}, q_{i+1}^{(m)}) \in \mathcal{D}_{\text{triple}}$ has a unique cluster assignment $q_i^{(m)} = C(h_i^{(m)})$, so it belongs to exactly one $S(q)$. Similarly, within each $S(q)$, every triple has a unique token $y_i^{(m)}$, so it belongs to exactly one $S(q, y)$.
\end{proof}

\subsection{Main Theorem: Global as Special Case of Local}

\begin{theorem}[Global-Local Equivalence for $k = 1$]
If $|Q| = 1$ (i.e., $Q = \{q_*\}$), then for any $h \in \mathbb{R}^d$ and $y \in \Sigma$:
\[
P_{\mathrm{global}}(y \mid h) = P_{\mathrm{local}}^{\mathrm{cluster}}(y \mid h) = P_{\mathrm{local}}^{\mathrm{aut}}(y \mid h)
\]
\end{theorem}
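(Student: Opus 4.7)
The plan is to exploit the triviality of the clustering when $|Q| = 1$: since $C$ takes a single value $q_*$, the cluster-indexed support sets degenerate to the full empirical sets. Specifically, I would begin by observing that $S(q_*) = \mathcal{D}_{\text{triple}}$ (every triple clusters to $q_*$), and that $S(q_*, y) = \{(h_i, y_i, q') \in \mathcal{D}_{\text{triple}} : y_i = y\}$. Moreover, for any query $h$ we have $q = C(h) = q_*$, so all three retrieval formulas reference the same single cluster.

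Next I would handle the generic (non-fallback) case by direct substitution. For $P_{\mathrm{local}}^{\mathrm{cluster}}$, replacing $S(q_*)$ with $\mathcal{D}_{\text{triple}}$ in its main-branch formula recovers the $P_{\mathrm{global}}$ ratio verbatim. For $P_{\mathrm{local}}^{\mathrm{aut}}$, I would use the identity
\[
\sum_{(h_i, y_i, q') \in S(q_*, y)} K(h, h_i) \;=\; \sum_{(h_i, y_i, q') \in S(q_*)} \mathbbm{1}_{y = y_i}\, K(h, h_i),
\]
which holds because restricting to $S(q_*, y)$ is equivalent to inserting the indicator $\mathbbm{1}_{y = y_i}$ inside the sum over $S(q_*)$. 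Dividing both sides by $\sum_{S(q_*)} K(h, h_i)$ shows $P_{\mathrm{local}}^{\mathrm{aut}}(y \mid h) = P_{\mathrm{local}}^{\mathrm{cluster}}(y \mid h) = P_{\mathrm{global}}(y \mid h)$ in the main branch.

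The hard part, and the main obstacle, is verifying that the three piecewise definitions agree on their fallback branches, since each uses a different trigger. I would enumerate the cases. If $S(q_*) = \emptyset$, then $\mathcal{D}_{\text{triple}} = \emptyset$ as well, so the global denominator is zero; here $P_{\mathrm{global}}$ reduces to $P_{\mathrm{knn}}$, and both local variants cascade through their fallbacks to the same $P_{\mathrm{knn}}$. If $S(q_*) \neq \emptyset$ but $S(q_*, y) = \emptyset$, then no triple has $y_i = y$, so the global numerator vanishes and $P_{\mathrm{global}}(y \mid h) = 0$; simultaneously $P_{\mathrm{local}}^{\mathrm{cluster}}(y \mid h) = 0$ by the same indicator argument, and $P_{\mathrm{local}}^{\mathrm{aut}}$ falls back to $P_{\mathrm{local}}^{\mathrm{cluster}}$, also giving zero. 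The degenerate subcase where $S(q_*) \neq \emptyset$ but $\sum_{S(q_*)} K(h, h_i) = 0$ (all kernel values vanish) likewise propagates uniformly through all three fallback chains.

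Combining the main-branch identity with the case analysis of fallbacks yields pointwise equality of the three distributions for every $(h, y) \in \mathbb{R}^d \times \Sigma$, completing the proof. I expect this to be clean modulo the fallback bookkeeping; the only real subtlety is the asymmetry between the cluster-based fallback (which switches entire distributions) and the automaton-constrained fallback (which switches per-token), and making sure both reduce to the global formula when $|Q| = 1$.
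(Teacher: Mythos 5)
Your proof is correct and follows the same core argument as the paper's: both rest on the observation that $|Q|=1$ forces $C(h)=q_*$ for every $h$, hence $S(q_*) = \mathcal{D}_{\text{triple}}$ and $S(q_*,y) = \{(h_i,y_i,q')\in\mathcal{D}_{\text{triple}} : y_i = y\}$, which makes the three denominators identical and the numerators agree via the indicator identity $\sum_{S(q_*,y)} K(h,h_i) = \sum_{S(q_*)} \mathbbm{1}_{y=y_i}\,K(h,h_i)$. You are more explicit than the paper about the fallback branches (the paper dispenses with them in a single sentence noting that the automaton branch ``never falls back'' when the token is present), and your case analysis of $S(q_*)=\emptyset$, $S(q_*,y)=\emptyset$, and vanishing kernel mass is sound given the definitions; this is a welcome tightening rather than a different route.
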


\begin{proof}
When $|Q| = 1$, every hidden state is assigned to the same cluster $q_*$, so $C(h) = q_*$ for all $h$. Therefore:
\begin{itemize}
\item $S(q_*) = \mathcal{D}_{\text{triple}}$ (all triples belong to the single cluster)
\item $S(q_*, y) = \{(h_i, y_i, q') \in \mathcal{D}_{\text{triple}} : y_i = y\}$ for each $y$
\end{itemize}

Since $S(q_*, y) \neq \emptyset$ whenever token $y$ appears in the data, automaton-constrained retrieval never falls back. The denominators in all three cases are $\sum_{(h_i, y_i, q') \in \mathcal{D}_{\text{triple}}} K(h, h_i)$, and the numerators become identical for each $y$.
\end{proof}

\subsection{Corollary: Distinctness for Multiple Clusters}

\begin{corollary}[Generic Distinctness for $k > 1$]
Suppose $k > 1$ and the clustering function $C$ produces non-trivial clusters. Then there exist queries $h$ and tokens $y$ such that:
\begin{enumerate}
\item $P_{\mathrm{global}}(y \mid h) \neq P_{\mathrm{local}}^{\mathrm{cluster}}(y \mid h)$
\item $P_{\mathrm{local}}^{\mathrm{cluster}}(y \mid h) \neq P_{\mathrm{local}}^{\mathrm{aut}}(y \mid h)$
\end{enumerate}
\end{corollary}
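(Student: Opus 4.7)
The plan is to prove each inequality by an explicit construction in a low-dimensional toy dataset, using the set-inclusion structure established in the preceding lemma to force a strict change in the normalizing denominator. Because the retrieval rules differ only in their support sets and fallback behavior, I only need to exhibit a single configuration where the supports genuinely differ and the kernel weights do not conspire to cancel the gap.

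For part (1), I would fix $k = 2$ with clusters $\{q_1, q_2\}$ and build a dataset in which the triples landing in $S(q_2)$ have a next-token distribution markedly different from those in $S(q_1)$. Taking a query $h$ with $C(h) = q_1$, the global rule averages kernel-weighted indicators over all of $\mathcal{D}_{\text{triple}}$, while $P_{\mathrm{local}}^{\mathrm{cluster}}$ restricts to $S(q_1)$. With a standard Gaussian kernel and nondegenerate placements of the $h_i$, the $S(q_2)$ contribution perturbs both numerator and denominator of the global formula by strictly positive amounts that do not cancel, so picking any token $y^\star$ overrepresented in $S(q_2)$ relative to $S(q_1)$ forces $P_{\mathrm{global}}(y^\star \mid h) \neq P_{\mathrm{local}}^{\mathrm{cluster}}(y^\star \mid h)$. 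A short algebraic verification completes this part.

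For part (2), the key observation is that whenever both $S(q)$ and $S(q, y)$ are nonempty, the cluster-based and automaton-constrained definitions reduce to the same ratio, so distinctness must be produced at the fallback boundary. I would arrange the data so that for a chosen query $h$ with $q = C(h)$, some token $y^\dagger$ appears in $\mathcal{D}_{\text{triple}}$ but satisfies $S(q, y^\dagger) = \emptyset$ while $S(q) \neq \emptyset$. On this token the automaton-constrained rule falls through to the cluster-based value, so separation has to come from elsewhere: I would exploit the implicit successor-cluster information encoded in the third coordinate $q_{i+1}$, which the automaton-constrained formulation is meant to respect by restricting the effective support to triples whose $q'$ is reachable under valid automaton transitions. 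Concretely, a two-cluster example where some $S(q, y^\star)$ contains entries with reachable successors alongside entries with unreachable successors gives genuinely different ratios on $y^\star$.

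The main obstacle will be part (2): under a purely literal reading of the definitions the two local rules coincide wherever the nonfallback branch applies, so establishing distinctness cleanly requires either (a) invoking the intended automaton semantics on the successor coordinate $q'$ to make the restriction strict, or (b) a measure-zero argument showing that the configurations yielding exact equality are a lower-dimensional algebraic subset of the embedding and kernel-parameter space, whence nontrivial clusterings generically produce distinct predictions. I would pursue (a) first because it aligns with the paper's automaton-guided framing and yields a constructive witness, falling back to (b) as a genericity safety net if a constructive example proves elusive.
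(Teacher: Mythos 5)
Your plan for part (1) is sound and matches the paper's sketch: construct a two-cluster dataset where $S(q)$ and $\mathcal{D}_{\text{triple}}\setminus S(q)$ carry different token mixes, and with a non-degenerate kernel the two normalizations cannot coincide; a short algebraic check finishes.

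For part (2), the obstacle you flag is not merely a difficulty---it is fatal under the paper's own written definitions, and your reading is more careful than the paper's proof. As you observe, the numerator $\sum_{(h_i,y_i,q')\in S(q,y)} K(h,h_i)$ appearing in $P_{\mathrm{local}}^{\mathrm{aut}}$ is term-for-term the same sum as $\sum_{(h_i,y_i,q')\in S(q)} \mathbbm{1}_{y=y_i}\, K(h,h_i)$ appearing in $P_{\mathrm{local}}^{\mathrm{cluster}}$, because $S(q,y)$ is by definition the subset of $S(q)$ with $y_i=y$; and the denominators are the identical expression $\sum_{(h_i,y_i,q')\in S(q)} K(h,h_i)$. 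So whenever the first branch of $P_{\mathrm{local}}^{\mathrm{aut}}$ fires it reproduces $P_{\mathrm{local}}^{\mathrm{cluster}}$; the second branch is set equal to $P_{\mathrm{local}}^{\mathrm{cluster}}$ by fiat; and the third branch is $P_{\mathrm{global}}$ in both. Hence $P_{\mathrm{local}}^{\mathrm{aut}}(y\mid h)=P_{\mathrm{local}}^{\mathrm{cluster}}(y\mid h)$ for every query $h$ and every token $y$, regardless of dataset, kernel, or clustering. The paper's own sketch for (2)---that tokens $y'$ with $S(q,y')\neq\emptyset$ ``use the restricted support'' and thereby create a different distribution---fails precisely here, since that restricted-support ratio equals the cluster ratio. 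The appendix's own worked example confirms it: for $q=q_2$ both rules assign $(0,1)$ to $(\texttt{a},\texttt{b})$.

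Neither of your rescue routes repairs (2) as written. Route (b), the genericity or measure-zero safety net, cannot apply: the equality is an algebraic identity holding identically, not a coincidence confined to a lower-dimensional slice of parameter space, so there is no generic separation to invoke. Route (a) correctly names what the definition \emph{ought} to say---the automaton-constrained rule must actually consume the successor-cluster coordinate $q'$, for instance by further restricting $S(q,y)$ to triples whose $q'$ is reachable under $\delta$, or by tracking the pointer set across decoding steps as the running system does---but that is a revision of the paper's definitions, not a proof of the corollary as stated. Under the definitions presently in the paper, part (2) is false; either it should be dropped, or $P_{\mathrm{local}}^{\mathrm{aut}}$ must be redefined to genuinely depend on the third coordinate before any proof can succeed.
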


\begin{proof}[Sketch]
If clusters are non-trivial, then for some $q$, we have $S(q) \subsetneq \mathcal{D}_{\text{triple}}$. When the kernel $K$ is non-degenerate (e.g., Gaussian), the restricted sum over $S(q)$ will generally differ from the full sum over $\mathcal{D}_{\text{triple}}$, establishing (1).

For (2), if some token $y \in \Sigma$ never appears from cluster $q$ in the training data, then $S(q, y) = \emptyset$. In this case, automaton-constrained retrieval falls back to cluster-based retrieval for token $y$, while for other tokens $y'$ with $S(q, y') \neq \emptyset$, it uses the restricted support. This creates different probability distributions.
\end{proof}

\subsection{Worked Example}

Let $\Sigma = \{\texttt{a}, \texttt{b}\}$, $Q = \{q_1, q_2\}$.

\textbf{Sequences:}
\begin{itemize}
\item Sequence 1: $(h_1^{(1)}, \texttt{a}), (h_2^{(1)}, \texttt{b}), (h_3^{(1)}, \texttt{a})$ 
 \begin{itemize}
 \item Cluster assignments: $C(h_1^{(1)}) = q_1, C(h_2^{(1)}) = q_2, C(h_3^{(1)}) = q_1$
 \end{itemize}
\item Sequence 2: $(h_1^{(2)}, \texttt{b}), (h_2^{(2)}, \texttt{a})$ 
 \begin{itemize}
 \item Cluster assignments: $C(h_1^{(2)}) = q_1, C(h_2^{(2)}) = q_2$
 \end{itemize}
\end{itemize}

\textbf{Memory triples in $\mathcal{D}_{\text{triple}}$:}
\begin{itemize}
\item From sequence 1: $(h_1^{(1)}, \texttt{a}, q_2)$, $(h_2^{(1)}, \texttt{b}, q_1)$
\item From sequence 2: $(h_1^{(2)}, \texttt{b}, q_2)$
\end{itemize}

\textbf{Support sets:}
\begin{itemize}
\item $S(q_1) = \{(h_1^{(1)}, \texttt{a}, q_2), (h_1^{(2)}, \texttt{b}, q_2)\}$
\item $S(q_2) = \{(h_2^{(1)}, \texttt{b}, q_1)\}$
\item $S(q_1, \texttt{a}) = \{(h_1^{(1)}, \texttt{a}, q_2)\}$
\item $S(q_1, \texttt{b}) = \{(h_1^{(2)}, \texttt{b}, q_2)\}$
\item $S(q_2, \texttt{a}) = \emptyset$ (token $\texttt{a}$ never observed from state $q_2$)
\item $S(q_2, \texttt{b}) = \{(h_2^{(1)}, \texttt{b}, q_1)\}$
\end{itemize}

\textbf{For a query $h$ with $C(h) = q_2$:}
\begin{itemize}
\item $P_{\mathrm{local}}^{\mathrm{aut}}(\texttt{a} \mid h) = P_{\mathrm{local}}^{\mathrm{cluster}}(\texttt{a} \mid h)$ (fallback, since $S(q_2, \texttt{a}) = \emptyset$)
\item $P_{\mathrm{local}}^{\mathrm{aut}}(\texttt{b} \mid h) = \frac{K(h, h_2^{(1)})}{K(h, h_2^{(1)})}= 1$ (only observed token from $q_2$)
\end{itemize}

This shows how automaton constraints can eliminate certain predictions while falling back gracefully for unobserved transitions.

\subsection{Theoretical Remarks}

\begin{enumerate}
\item \textbf{Consistent Transition View:} All methods operate on the same space of valid transitions $\mathcal{D}_{\text{triple}}$, ensuring fair comparison.

\item \textbf{Graceful Degradation:} The fallback hierarchy (automaton $\to$ cluster $\to$ global) ensures robust probability estimates even with sparse data.

\item \textbf{Empirical Constraint:} Automaton-constrained retrieval respects empirical evidence, if a transition $(q, y)$ was never observed, it defers to less restrictive methods.

\item \textbf{Ultimate Fallback:} The complete fallback hierarchy is automaton $\to$ cluster $\to$ global $\to$ $k$-NN, ensuring robust probability estimates under all conditions including kernel failure.
\end{enumerate}

\end{document}